\newtheorem{theorem}{Theorem}
\newtheorem{lemma}{Lemma}
\newtheorem{proposition}{Proposition}
\newtheorem{definition}{Definition}
\newcommand{\norm}[1]{\left|\left|#1\right|\right|}
\icmltitlerunning{Transfer with Model Features}
\begin{document}

\twocolumn[
\icmltitle{Transfer with Model Features in Reinforcement Learning}




\begin{icmlauthorlist}
\icmlauthor{Lucas Lehnert}{ed}
\icmlauthor{Michael L. Littman}{ed}
\end{icmlauthorlist}

\icmlaffiliation{ed}{Computer Science Department, Brown University, Providence, RI, United States}

\icmlcorrespondingauthor{Lucas Lehnert}{lucas\_lehnert@brown.edu}

\icmlkeywords{Machine Learning, ICML}

\vskip 0.3in
]



\printAffiliationsAndNotice{}  

\begin{abstract}
A key question in Reinforcement Learning is which representation an agent can learn to efficiently reuse knowledge between different tasks.
Recently the Successor Representation was shown to have empirical benefits for transferring knowledge between tasks with shared transition dynamics.
This paper presents Model Features: a feature representation that clusters behaviourally equivalent states and that is equivalent to a Model-Reduction.
Further, we present a Successor Feature model which shows that learning Successor Features is equivalent to learning a Model-Reduction.
A novel optimization objective is developed and we provide bounds showing that minimizing this objective results in an increasingly improved approximation of a Model-Reduction.
Further, we provide transfer experiments on randomly generated MDPs which vary in their transition and reward functions but approximately preserve behavioural equivalence between states.
These results demonstrate that Model Features are suitable for transfer between tasks with varying transition and reward functions.
\end{abstract}

\begin{figure*}
\centering
\subfigure[$30 \times 3$ Grid World]{\label{fig:grid-world}
%

\begin{tikzpicture}[domain=-2.5:4] 


\node[](v) at ( .0, .0) [rectangle, draw, minimum size=.7cm] {0};
\node[](v) at ( .7, .0) [rectangle, draw, minimum size=.7cm] {0};
\node[](v) at (1.4, .0) [rectangle, draw, minimum size=.7cm] {1};

\draw[] (-.35, .35) -- (-.35, .5 );
\draw[] ( .35, .35) -- ( .35, .5 );
\draw[] (1.05, .35) -- (1.05, .5 );
\draw[] (1.75, .35) -- (1.75, .5 );

\draw[] (-.35,.80) -- (-.35,.95);
\draw[] ( .35,.80) -- ( .35,.95);
\draw[] (1.05,.80) -- (1.05,.95);
\draw[] (1.75,.80) -- (1.75,.95);

\draw[dotted] (-.35, .5 ) -- (-.35,.80);
\draw[dotted] ( .35, .5 ) -- ( .35,.80);
\draw[dotted] (1.05, .5 ) -- (1.05,.80);
\draw[dotted] (1.75, .5 ) -- (1.75,.80);

\node[](v) at ( .0,1.3) [rectangle, draw, minimum size=.7cm] {0};
\node[](v) at ( .7,1.3) [rectangle, draw, minimum size=.7cm] {0};
\node[](v) at (1.4,1.3) [rectangle, draw, minimum size=.7cm] {1};

\node[](v) at ( .0,2.0) [rectangle, draw, minimum size=.7cm] {0};
\node[](v) at ( .7,2.0) [rectangle, draw, minimum size=.7cm] {0};
\node[](v) at (1.4,2.0) [rectangle, draw, minimum size=.7cm] {1};

\node[rotate=90](v) at (-1.1,1.2) [] {30 rows};
\draw[thick] (-.6,-.35) -- (-.8,-.35) -- (-.8,2.35) -- (-.6,2.35); 

\draw[thick,  blue] (-.3,-.3) -- (-.3,2.3) --  (.3,2.3) -- (.3,-.3)  -- cycle;
\draw[thick, green] ( .4,-.3) -- ( .4,2.3) --  (1.,2.3) -- (1.,-.3)  -- cycle;
\draw[thick,   red] (1.1,-.3) -- (1.1,2.3) -- (1.7,2.3) -- (1.7,-.3) -- cycle;


\node[](v) at ( .0,-1.25) [rectangle, draw, minimum size=.7cm] {0};
\node[](v) at ( .7,-1.25) [rectangle, draw, minimum size=.7cm] {0};
\node[](v) at (1.4,-1.25) [rectangle, draw, minimum size=.7cm] {1};

\draw[thick,  blue] (-.3,-.95) -- (-.3,-1.55) --  (.3,-1.55) -- (.3,-.95)  -- cycle;
\draw[thick, green] ( .4,-.95) -- ( .4,-1.55) --  (1.,-1.55) -- (1.,-.95)  -- cycle;
\draw[thick,   red] (1.1,-.95) -- (1.1,-1.55) -- (1.7,-1.55) -- (1.7,-.95)  -- cycle;

\draw[thick,-latex] (.7,-.45) -- (.7,-.8);
\node[](p) at (.3,-.6) [] {$\phi$};

\end{tikzpicture}}
\subfigure[Initial Features]{\label{fig:feat-init}\includegraphics[width=.38\textwidth]{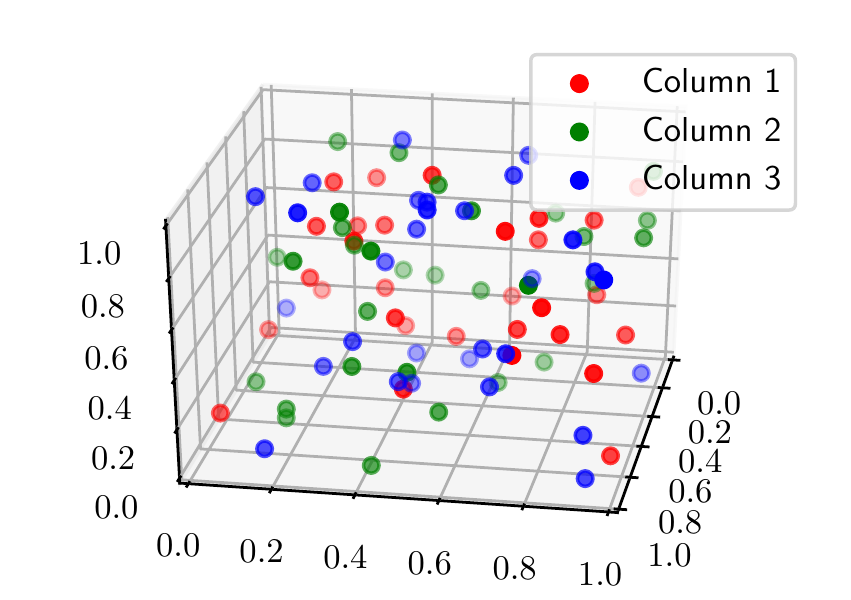}}
\subfigure[Learned Features]{\label{fig:feat-learned}\includegraphics[width=.38\textwidth]{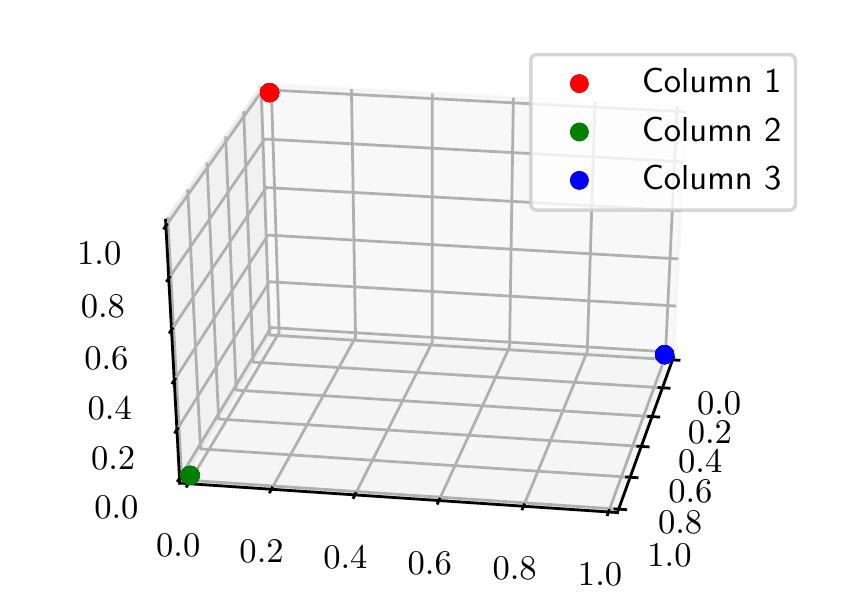}}

\caption{Grid World Example. The agent can move up, left, right, or down, and will always receive a +1 reward for selecting an action in the red column, and a zero reward otherwise. A model reduction collapses each column into a single state (bottom~\ref{fig:grid-world}). This three-state MDP captures all dynamics: the +1 reward state is distinct from the remaining states describe the distance to the positive reward state. The row index is not needed in order to evaluate an arbitrary policy. Our goal is optimize an initially random feature representation (Figure~\ref{fig:feat-init}) so that bisimilar features are assigned approximately the same feature vector (Figure~\ref{fig:feat-learned}).}
\label{fig:grid-world-example}

\end{figure*}


In Reinforcement Learning (RL)~\cite{sutton98,kaelbling1996reinforcement} one considers interactions between an intelligent agent and an environment.
These interactions consists of the agent choosing an action from a set of actions, which then triggers a transition in the environment's state.
For each transition the agent is provided with a single scalar reward.
The agent's objective is to compute an action-selection strategy, also called a \emph{policy}, that maximizes overall rewards.
Transfer experiments in RL~\cite{taylor2009transfer} can provide insight into which information or representation an agent should retain from a task in order to solve a different task more efficiently.
Recently, the Successor Representation~\cite{dayan1993successor}, which predicts the visitation frequency of future states, was shown to have empirical benefits in transfer experiments~\cite{baretto2017sf,zhang2016deepsucc,lehnert2017sf} and it was shown to be a representation humans are likely to use when transferring knowledge between different tasks~\cite{momennejad2017successor}.

The Successor Representation can be viewed as an intermediate between model-free and model-based RL~\cite{momennejad2017successor}.
In model-free RL, the intelligent agent only learns a value function which predicts the future return of a single policy.
In model-based RL, the intelligent agent learns a model of its environment which is sufficient to make predictions about individual future reward outcomes, given any arbitrary action sequence~\cite{sutton1990dyna}.
In comparison, Successor Representations (SRs) predict future state visitation frequencies under a particular policy.
By associating a feature vector with each state, SRs can be generalized to Successor Features (SFs), which predict the discounted sum of future state features~\cite{baretto2017sf}.
Because the value function of any policy can be written as the dot-product between the SF of a specific state as well as the reward model, transfer between tasks with different reward models is efficient~\cite{lehnert2017sf}.
However, \citet{lehnert2017sf} have also shown that SFs are tied to the transition function and a particular policy, making SFs unsuitable for transfer between tasks where more than the reward function changes.

In this paper we introduce \emph{Model Features}, a feature representation that provably assigns identical features to behaviourally equivalent states~\cite{givan2003bisimulation} (Section~\ref{sec:mr}).
Model Features can be viewed as a form of \emph{Model-Reduction} which compresses the state space such that future reward outcomes can be predicted using only the compressed representation.
Further, we present a modification of the architecture presented by~\citeauthor{baretto2017sf} and show that this architecture can be used to learn Model Features (Section~\ref{sec:sf} and ~\ref{sec:learning}).
Hence, the presented SF architecture is not restricted for transfer between tasks with common transition functions (Section~\ref{sec:transfer}).

\section{Model Reductions}\label{sec:mr}

A finite \emph{Markov Decision Processes (MDP)} is a tuple $M = \langle \mathcal{S}, \mathcal{A}, p, r, \gamma \rangle$, with a finite state space $\mathcal{S}$, a finite action space $\mathcal{A}$, a transition function $p(s,a,s') = \mathbb{P} \{ s' | s,a \}$, a reward function $r : \mathcal{S} \times \mathcal{A} \times \mathcal{S} \to \mathbb{R}$, and a discount factor $\gamma \in [0,1)$.
The transition and reward function can also be written in matrix or vector notation as a stochastic state-to-state transition matrix $\pmb{P}^a$ and an expected reward vector $\pmb{r}^a$ respectively.
A policy $\pi : \mathcal{S} \times \mathcal{A} \to [0,1]$ specifies the probabilities with which actions are selected at any given state (and $\sum_{a \in \mathcal{A}} \pi(s,a) = 1$).
If a policy $\pi$ is used, the transition function and expected rewards generated by this policy are denoted with $\pmb{P}^\pi$ and $\pmb{r}^\pi$ respectively.
The value function $V^\pi(s) = \mathbb{E} \left[ \sum_{t=1}^\infty \gamma^{t-1} r(s_t,a_t,s_{t+1}) \middle| s_0 = s, \pi \right]$ can also be written as a vector $\pmb{v}^\pi = \sum_{t=1}^\infty \gamma^{t-1} \left[ \pmb{P}^\pi \right]^{t-1} \pmb{r}^\pi$.
The action-conditional Q-function consists of a set of vectors $\{ \pmb{q}^a \}_{a \in \mathcal{A}}$ with
\begin{equation}
\pmb{q}^a = \pmb{r}^a + \gamma \pmb{P}^a \pmb{v}^\pi.
\end{equation}

A Model-Reduction~\cite{givan2003bisimulation} is a clustering of the state space $\mathcal{S}$ such that no information of the transition and reward functions relevant for reward prediction is lost.
Specifically, a Model-Reduction clusters behaviourally equivalent states.
Consider the example grid world shown in Figure~\ref{fig:grid-world}.
In this MDP, each column forms a set of behaviourally equivalent states because for each state partition two criteria are satisfied:
\begin{enumerate*}
\item the one-step rewards are the same, and
\item for two states of the same partition the distribution over clustered next states is identical. 
\end{enumerate*} 
This is the case in Figure~\ref{fig:grid-world} and one can observe that the compressed MDP retains all information necessary to predict future reward outcomes, because only the columns describe the distance in terms of time steps to the +1 reward.
Bisimilarity between two states can be defined as follows. 

\begin{definition}[Bisimulation]\label{def:bisimulation}
For a finite MDP $M = \langle \mathcal{S}, \mathcal{A}, p, r, \gamma \rangle$, two states $s$ and $\tilde{s}$ are bisimilar if and only if $\forall a \in \mathcal{A}$, 
\begin{enumerate}[nolistsep]
\item $r(s,a) = r(\tilde{s},a)$, and
\item $\forall s_\phi \in \mathcal{B}$, $\sum_{s' \in s_\phi} p(s,a,s') = \sum_{s' \in s_\phi} p(\tilde{s},a,s')$,
\end{enumerate} 
where $\mathcal{B}$ is the set of state partitions, where all states in each partition $s_\phi$ are bisimilar.
\end{definition}

This paper introduces Model Features, a function $\phi : \mathcal{S} \to \mathbb{R}^n$ such that for any two bisimilar states $s$ and $\tilde{s}$, $\phi(s) = \phi(\tilde{s})$. 
Figures~\ref{fig:feat-init} and~\ref{fig:feat-learned} illustrate the idea of learning such a feature representation that maps bisimilar states to the same feature cluster.

\subsection{Mapping MDPs to MDPs with State Abstractions}

To derive an optimization objective to learn Model Features, we tie feature representations to the state aggregation framework presented by~\citet{li2006abstraction}.
In their framework the state space is clustered to obtain a new abstract MDP, where each state in the abstract MDP corresponds to a cluster of states in the original MDP.
If each state $s$ of an MDP $M$ is represented as a one-hot bit-vector $\pmb{s}$ of dimension $| \mathcal{S} |$, then we can think of a feature representation $\phi$ as mapping a one-hot state $\pmb{s}$ to abstract one-hot states $\pmb{s}_\phi$ of a lower dimension.
Hence a feature representation can be written as a partition matrix.

\begin{definition}[Partition Matrix]\label{def:partition-matrix}
Let $\phi$ be a feature representation compressing a state space $\mathcal{S}$ to a state space $\mathcal{S}_\phi$ where $|\mathcal{S}_\phi| \le |\mathcal{S}_\phi|$.
Then the partition matrix $\pmb{\Phi} \in \mathbb{R}^{|\mathcal{S}| \times |\mathcal{S}_\phi|}$ is a zero-one bit matrix with entries $\pmb{\Phi}(s,s_\phi) = \pmb{1}\left[ s_\phi = \phi(s) \right]$, where $\pmb{1} \left[ \cdot \right]$ is the indicator function.
\end{definition}

\begin{figure}
\centering

\begin{tikzpicture}[domain=-2.5:4] 

\node[](vphi) at (3.4,2.7) {$\pmb{q}^{\pi,a}_{\phi}$};
\node[] at (3.4,1.5) {$Q^\pi_\phi(s_{\phi,1},a)$};
\node[] at (3.4,1.0) {$Q^\pi_\phi(s_{\phi,2},a)$};
\node[] at (3.4,0.5) {$Q^\pi_\phi(s_{\phi,3},a)$};
\draw[thick] (2.5,.3) -- (2.4,.3) -- (2.4,1.7) -- (2.5,1.7);
\draw[thick] (4.3,.3) -- (4.4,.3) -- (4.4,1.7) -- (4.3,1.7);

\node[](v') at (-.3,2.7) {$\pmb{q}^{\pi,a}$};
\node[] at (-.3,2.0) {$Q^\pi(s_1,a)$};
\node[] at (-.3,1.5) {$Q^\pi(s_2,a)$};
\node[] at (-.3,1.0) {$Q^\pi(s_3,a)$};
\node[] at (-.3,0.5) {$Q^\pi(s_4,a)$};
\node[] at (-.3,0.0) {$Q^\pi(s_5,a)$};
\draw[thick] (-1.1,-.2) -- (-1.2,-.2) -- (-1.2,2.2) -- (-1.1,2.2);
\draw[thick] ( .5,-.2) -- ( .6,-.2) -- ( .6,2.2) -- ( .5,2.2);

\node[](phimat) at (1.5,2.7) {$\pmb{\Phi}$};

\draw[thick, green] (0.8,0.0) -- (2.2,0.5);
\draw[thick, green] (0.8,0.5) -- (2.2,1.0) -- (0.8,1.0);
\draw[draw=none, fill, green, fill opacity=0.2] (0.8,0.5) -- (2.2,1.0) -- (0.8,1.0) -- cycle;
\draw[thick, green] (0.8,1.5) -- (2.2,1.5) -- (0.8,2.0);
\draw[draw=none, fill, green, fill opacity=0.2] (0.8,1.5) -- (2.2,1.5) -- (0.8,2.0) -- cycle;

\draw[-latex] (vphi) -- (phimat);
\draw[-latex] (phimat) -- (v');

\end{tikzpicture}

\caption{The $\pmb{\Phi}$ matrix can be thought of as a projection operation mapping states in the clustered state space to states in the original state space.}
\label{fig:partition-matrix}
\end{figure}
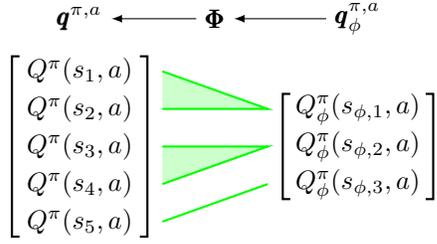

For example, suppose the feature representation $\phi$ is designed to represent the action-value function $Q^\pi$.
Then, the function $\phi$ has to merge states with equal Q-values into the same cluster, so that we can write
\begin{equation}
\pmb{q}^a = \pmb{\Phi} \pmb{q}_\phi^a, ~\forall a \in \mathcal{A}, \label{eq:model-free-features}
\end{equation}
where $\pmb{q}_\phi^a$ is a vector containing the Q-values each state cluster has in common.
Figure~\ref{fig:partition-matrix} illustrates how the partition matrix $\pmb{\Phi}$ clusters states schematically.
Further,~\eqref{eq:model-free-features} can be interpreted as a definition of a model-free feature representation, because the partition matrix $\pmb{\Phi}$ is designed to represent the value function $Q^\pi$.
The same reasoning can be applied to obtain conditions for Model Features.

\begin{theorem}\label{thm:bisim}
For an MDP $M=\langle \mathcal{S}, \mathcal{A}, p, r, \gamma \rangle$, let $\phi$ be a state abstraction that induces the abstract MDP $M_\phi = \langle \mathcal{S}_\phi, \mathcal{A}, p_\phi, r_\phi, \gamma \rangle$.
If the corresponding partition matrix $\pmb{\Phi}$ satisfies
\begin{equation}
\forall a \in \mathcal{A}, ~ \pmb{r}^a = \pmb{\Phi} \pmb{r}_\phi^a ~\text{and}~ \pmb{P}^a \pmb{\Phi}  =  \pmb{\Phi} \pmb{P}_\phi^a, \label{eq:thm-bisim}
\end{equation}
where $\pmb{r}^a$ and $\pmb{P}^a$ are the reward and transition matrices of $M$, and $\pmb{r}_\phi^a$ and $\pmb{P}_\phi^a$ are the reward and transition matrices of $M_\phi$, then $\pmb{\Phi}$ is a Model-Reduction (Definition~\ref{def:bisimulation}).
\end{theorem}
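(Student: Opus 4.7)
The plan is to read off both bisimulation conditions of Definition~\ref{def:bisimulation} directly from the two matrix identities in~\eqref{eq:thm-bisim} by examining individual entries, exploiting the structural fact that every row of $\pmb{\Phi}$ contains exactly one nonzero entry, located at column $\phi(s)$.

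First, I would unpack the reward identity $\pmb{r}^a = \pmb{\Phi} \pmb{r}_\phi^a$ coordinate-wise: since row $s$ of $\pmb{\Phi}$ selects the single component $\phi(s)$ of $\pmb{r}_\phi^a$, one obtains $r(s,a) = \pmb{r}_\phi^a(\phi(s))$. Hence the one-step reward of a state depends on $s$ only through its partition label, which immediately yields condition~1 of Definition~\ref{def:bisimulation} for any pair $s, \tilde{s}$ in the same partition.

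Next, I would evaluate the transition identity $\pmb{P}^a \pmb{\Phi} = \pmb{\Phi} \pmb{P}_\phi^a$ at an entry $(s, s_\phi)$. The left-hand side becomes $\sum_{s'} p(s,a,s')\,\pmb{\Phi}(s', s_\phi) = \sum_{s' \in s_\phi} p(s,a,s')$, which is exactly the aggregated probability appearing in condition~2 of the bisimulation definition. The right-hand side reduces, by the same one-nonzero-per-row property, to $\pmb{P}_\phi^a(\phi(s), s_\phi)$, which depends on $s$ only through $\phi(s)$. Equating the two sides shows that $\sum_{s' \in s_\phi} p(s,a,s')$ is a function of $\phi(s)$ alone, which gives condition~2 for any pair $s, \tilde{s}$ sharing a partition.

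Combining these two observations, the partition induced by $\phi$ satisfies both bisimilarity criteria for every action, so $\pmb{\Phi}$ is a Model-Reduction. There is no serious obstacle: the content of the argument is the observation that each matrix equation in~\eqref{eq:thm-bisim} is a compact restatement of ``the aggregated quantity depends on $s$ only through $\phi(s)$.'' The only care needed is the entry-wise bookkeeping and the harmless abuse of notation where $s_\phi$ denotes both an abstract state index and its preimage $\phi^{-1}(s_\phi) \subseteq \mathcal{S}$.
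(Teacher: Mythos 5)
Your proof is correct, but it takes a genuinely more elementary route than the paper's. You exploit only the structural fact that each row of $\pmb{\Phi}$ is a one-hot indicator of $\phi(s)$, so that entry-wise evaluation of $\pmb{r}^a = \pmb{\Phi}\pmb{r}_\phi^a$ gives $r(s,a) = \pmb{r}_\phi^a(\phi(s))$ and evaluation of $\pmb{P}^a\pmb{\Phi} = \pmb{\Phi}\pmb{P}_\phi^a$ at $(s,s_\phi')$ gives $\sum_{s'\in\phi^{-1}(s_\phi')} p(s,a,s') = \pmb{P}_\phi^a(\phi(s),s_\phi')$; both quantities then visibly depend on $s$ only through its partition label, which is exactly the bisimulation property. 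The paper instead routes the argument through the \citet{li2006abstraction} abstraction framework: it introduces the weight matrix $\pmb{\Omega}$ (Lemma~\ref{lem:abs-mat}), notes that the hypotheses must hold for an \emph{arbitrary} weighting function $\omega$, and then extracts the state-wise equalities by choosing two weighting functions that each concentrate all mass on a single state of the partition. Your version is shorter, avoids the auxiliary lemma entirely, and relies only on the literal matrix identities in the hypothesis rather than on their validity across all choices of $\omega$; what the paper's detour buys is an explicit link between the matrix conditions and the $\omega$-weighted definition of the abstract model $M_\phi$, which it reuses in the proof of Proposition~\ref{prop:sf-bisim-equiv}. One caveat that applies equally to both arguments: Definition~\ref{def:bisim} is stated as an ``if and only if,'' and neither proof addresses the converse direction (that states in \emph{different} partitions fail to be bisimilar), so you are not losing anything relative to the paper by omitting it.
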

Intuitively, the conditions~\eqref{eq:thm-bisim} map an MDP $M$ into an MDP $M_\phi$ by ``compressing'' the reward and transition model.
The reward model is compressed in the same way as the Q-function in~\eqref{eq:model-free-features}.
The transition model is compressed such that the distribution over next state clusters (a column of $\pmb{P}^a \pmb{\Phi}$) equals the transition probabilities between clusters (a column of $\pmb{P}_\phi^a$) and copying this distribution to each state in the original MDP. 
If two states $s$ and $\tilde{s}$ are bisimilar, then $\pmb{s} \pmb{\Phi} \pmb{P}_\phi^a = \pmb{\tilde{s}} \pmb{\Phi} \pmb{P}_\phi^a$\footnote{Each row of $\pmb{\Phi} \pmb{P}_\phi^a$ contains a probability distribution over next clusters, and multiplying with the one-hot vector $\pmb{s}$ projects out the distribution for a state $s$.}, i.e. their distribution over next state clusters is identical.
Hence, the transition matrix identity in~\eqref{eq:thm-bisim} is identical to the transition model condition in Definition~\ref{def:bisimulation}.
The proof of Theorem~\ref{thm:bisim} is listed in Appendix~\ref{app:proofs}.

\section{Connection to Successor Features}\label{sec:sf}

The SR~\cite{dayan1993successor} is defined as the discounted sum of future states $\psi^\pi(s) = \mathbb{E} \left[ \sum_{t=1}^\infty \gamma^{t-1} \pmb{s}_t \middle| \pmb{s}_t = \pmb{s} \right]$.
In matrix notation, a SR for a particular policy $\pi$ can be written as 
\begin{equation}
\pmb{\Psi}^\pi_\text{SR} = \sum_{t=1}^\infty \gamma^{t-1} \left[ \pmb{P}^\pi \right]^{t-1}.
\end{equation}
Intuitively, a SR of a specific state describes the visitation frequency of all future states.
A column of a re-scaled SR $(1 - \gamma) \pmb{\Psi}^\pi$ then contains a marginal (over time steps) of reaching a specific state, where the number of time steps needed to reach a state follows a geometric distribution with parameter $\gamma$.
Let $\pmb{\Psi}^a$ be an action-conditional SR defined as
\begin{equation}
\pmb{\Psi}^a_\text{SR} = \pmb{I} + \gamma \pmb{P}^a \pmb{\Psi}^\pi_\text{SR}, \label{eq:sr-one-step}
\end{equation}
where $\pmb{\Psi}^a$ has a dependency on the policy $\pi$~\cite{lehnert2017sf}.
Let $\pmb{\Phi}$ be an arbitrary partition matrix and define a Successor Feature (SF) matrix $\pmb{\Psi}^a$ as
\begin{align}
\forall a \in \mathcal{A}, ~ \pmb{\Psi}^a = \pmb{\Psi}^a_\text{SR} \pmb{\Phi} 
&= \pmb{\Phi} + \gamma \pmb{P}^a \pmb{\Psi}^\pi_\text{SR} \pmb{\Phi} &\text{(by~\eqref{eq:sr-one-step})} \\
&= \pmb{\Phi} + \gamma \pmb{P}^a \pmb{\Psi}^\pi.
\end{align}
In our framework, each row of the matrix $\pmb{\Psi}^a_\text{SR} \pmb{\Phi}$ will describe the visitation frequency over state clusters, because the matrix $\pmb{\Phi}$ will aggregate all visitation frequency values over states that belong to the same partition (see also Figure~\ref{fig:partition-matrix}).
If we make the design assumption that 
\begin{align}
\forall a \in \mathcal{A}, ~ \pmb{\Psi}^a &= \pmb{\Phi} \pmb{F}^a, \label{eq:sf-phi}
\end{align}
then each row of $\pmb{\Psi}^a$ will duplicate a row of $\pmb{F}^a$ for states belonging to the same cluster.
Hence, the matrix $\pmb{F}^a$ is a SR over state clusters, rather than individual states.

By construction, each row of the expression $(1-\gamma) \pmb{\Psi}^a$ contains the marginal (over time steps) of reaching a specific state cluster, similar to the expression $\pmb{\Phi} \pmb{P}_\phi^a$ from~\eqref{eq:thm-bisim} where each row contains a probability distribution over next state clusters.
This connection allows us to show that SFs encode Model Features and thus Model-Reductions.

\begin{proposition}\label{prop:sf-bisim-equiv}
Consider a finite MDP $\mathcal{M} = \langle \mathcal{S}, \mathcal{A}, p, r, \gamma \rangle$, and a partition matrix $\pmb{\Phi}$.
Let $\overline{\pi}$ be an exploratory policy such that every possible transition in $M$ is visited with some probability.
If 
\begin{equation}
\forall a \in \mathcal{A}, ~ \pmb{r}^a = \pmb{\Phi} \pmb{r}_\phi^a ~\text{and}~\pmb{\Phi} \pmb{F}^a = \pmb{\Phi} + \gamma \pmb{P}^a \pmb{\Phi} \pmb{F}^{\overline{\pi}}, \label{eq:sf-bisim-bitmatrix}
\end{equation}
then $\forall a \in \mathcal{A}, ~ \pmb{r}^a = \pmb{\Phi} \pmb{r}_\phi^a ~\text{and}~ \pmb{P}^a \pmb{\Phi}  =  \pmb{\Phi} \pmb{P}_\phi^a$.
\end{proposition}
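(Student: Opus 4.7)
The reward identity $\pmb{r}^a = \pmb{\Phi} \pmb{r}_\phi^a$ already appears on both sides, so I only need to derive the transition identity $\pmb{P}^a \pmb{\Phi} = \pmb{\Phi} \pmb{P}_\phi^a$ from the SF relation in~\eqref{eq:sf-bisim-bitmatrix}. My plan is to rearrange that relation to isolate $\pmb{P}^a \pmb{\Phi}$, then right-invert the factor $\pmb{F}^{\overline{\pi}}$ that appears. Writing
\begin{equation}
\gamma \pmb{P}^a \pmb{\Phi} \pmb{F}^{\overline{\pi}} = \pmb{\Phi}(\pmb{F}^a - \pmb{I}),
\end{equation}
I see that if $\pmb{F}^{\overline{\pi}}$ is invertible, then setting $\pmb{P}_\phi^a := \tfrac{1}{\gamma}(\pmb{F}^a - \pmb{I})[\pmb{F}^{\overline{\pi}}]^{-1}$ yields $\pmb{P}^a \pmb{\Phi} = \pmb{\Phi} \pmb{P}_\phi^a$, and together with the reward equality the conclusion follows directly (indeed it reproduces the exact hypothesis of Theorem~\ref{thm:bisim}).

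The main obstacle is therefore proving that $\pmb{F}^{\overline{\pi}}$ is invertible. Since $\pmb{\Phi}$ is a partition matrix each of whose columns is nonzero, it has full column rank and admits a left inverse $\pmb{\Phi}^+ = (\pmb{\Phi}^\top \pmb{\Phi})^{-1} \pmb{\Phi}^\top$. Averaging the SF fixed-point relation over $\overline{\pi}$ and left-multiplying by $\pmb{\Phi}^+$ produces
\begin{equation}
\pmb{F}^{\overline{\pi}} = \pmb{I} + \gamma \, \pmb{\Phi}^+ \pmb{P}^{\overline{\pi}} \pmb{\Phi} \, \pmb{F}^{\overline{\pi}}.
\end{equation}
A direct calculation shows that $\pmb{\Phi}^+ \pmb{P}^{\overline{\pi}} \pmb{\Phi}$ is row-stochastic: each entry is an average, across the states of a cluster, of the $\overline{\pi}$-induced probability of landing in another cluster. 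Its spectral radius is therefore exactly $1$, and since $\gamma < 1$ the matrix $\pmb{I} - \gamma \, \pmb{\Phi}^+ \pmb{P}^{\overline{\pi}} \pmb{\Phi}$ is invertible; hence $\pmb{F}^{\overline{\pi}}$ equals its inverse and is itself invertible. Once this is in hand, the right-inversion step above is purely mechanical.

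Finally, I would check that the candidate $\pmb{P}_\phi^a$ actually defines a stochastic abstract transition matrix, so that the conclusion matches the setting of Theorem~\ref{thm:bisim} rather than merely an abstract linear identity. Row-stochasticity follows from $\pmb{\Phi} \pmb{P}_\phi^a \pmb{1} = \pmb{P}^a \pmb{\Phi} \pmb{1} = \pmb{1}$ combined with the full column rank of $\pmb{\Phi}$; non-negativity follows because each row of $\pmb{\Phi} \pmb{P}_\phi^a$ duplicates a row of $\pmb{P}_\phi^a$ while coinciding with a row of the non-negative matrix $\pmb{P}^a \pmb{\Phi}$. The exploratory assumption on $\overline{\pi}$ is not strictly needed for the linear-algebraic invertibility argument (that only requires $\gamma < 1$ and $\pmb{P}^{\overline{\pi}}$ stochastic), but it guarantees that the SF fixed-point constraint is nontrivial at every transition, ruling out degenerate partition matrices that satisfy~\eqref{eq:sf-bisim-bitmatrix} only vacuously.
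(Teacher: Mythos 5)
Your proof is correct and follows essentially the same route as the paper's: both left-invert $\pmb{\Phi}$ to obtain the compressed recursion $\pmb{F}^{\overline{\pi}} = \pmb{I} + \gamma \pmb{P}_\phi^{\overline{\pi}} \pmb{F}^{\overline{\pi}}$ (you via the pseudo-inverse $(\pmb{\Phi}^\top \pmb{\Phi})^{-1}\pmb{\Phi}^\top$, the paper via its weight matrix $\pmb{\Omega}$ from Lemma~\ref{lem:abs-mat}), deduce invertibility of $\pmb{F}^{\overline{\pi}}$ from stochasticity of the compressed transition matrix together with $\gamma < 1$, and then cancel $\pmb{F}^{\overline{\pi}}$ on the right to obtain $\pmb{P}^a \pmb{\Phi} = \pmb{\Phi} \pmb{P}_\phi^a$. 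Your a posteriori verification that the extracted $\pmb{P}_\phi^a$ is row-stochastic and non-negative is a minor presentational difference; the paper gets the same fact up front from the abstract-MDP construction $\pmb{P}_\phi^a = \pmb{\Omega} \pmb{P}^a \pmb{\Phi}$.
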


Intuitively, SFs are connected to Model Features because SFs can be viewed as a discounted infinite-step model.
In Proposition~\ref{prop:sf-bisim-equiv}, the policy $\overline{\pi}$ is assumed to be exploratory to ensure that all possible transitions are included in the SF.
Besides this assumption, the state representation does not depend on the policy used to compute the SFs, because our model does not condition the state representation $\pmb{\Phi}$ on the action space, in contrast to the model presented by~\citeauthor{baretto2017sf}.
The proof of Proposition~\ref{prop:sf-bisim-equiv} is listed in Appendix~\ref{app:proofs}.

\subsection{Approximate Model-Reductions}

To design an algorithm that approximates Model Features, we will now generalize the conditions stated in~\eqref{eq:sf-bisim-bitmatrix} to arbitrary feature representations.
For the remainder of the paper, the matrix $\pmb{\Phi}$ is assumed to be real valued and each row corresponds to a feature vector associated with a state.

The following theorem states that a feature representation that can represent one-step rewards as well as SFs, can also be used for representing the value function for any arbitrary policy.
This criterion is characteristic for a Model-Reduction~\cite{givan2003bisimulation,ferns2004bisimmetrics}, because the learned feature representation retains enough information to predict future rewards for any arbitrary action sequence.
The following theorem parallels Proposition~\ref{prop:sf-bisim-equiv} because it rephrases the conditions~\eqref{eq:sf-bisim-bitmatrix} to the approximate case for real valued matrices $\pmb{\Phi}$.

\begin{theorem}[Approximate Model Features]\label{thm:approx-model}
Consider a finite MDP $\mathcal{M} = \langle \mathcal{S}, \mathcal{A}, p, r, \gamma \rangle$ and a feature projection matrix $\pmb{\Phi}$.
Assume that the SF of the feature space is $\pmb{F}^a = \pmb{I} + \gamma \pmb{P}_\phi^a \pmb{F}^{\overline{\pi}}$, with $|| \pmb{P}_\phi^a ||_\infty \le 1$ for every action $a$, and assume that for every action $a \in \mathcal{A}$,
\begin{align}
&\norm{ \pmb{\Phi} \pmb{r}_\phi^a - \pmb{r}^a }_\infty \le \varepsilon_r ~\text{and}~ \label{eq:r-err} \\
&\norm{ \pmb{\Phi} + \gamma \pmb{P}^a \pmb{\Phi} \pmb{F}^{\overline{\pi}} - \pmb{\Phi} \pmb{F}^a }_\infty \le \varepsilon_\psi. \label{eq:sf-err}
\end{align}
Then, for any policy $\pi$ with action-values $ \{ \pmb{q}^a \}_{a \in \mathcal{A}}$, 
\begin{equation}
\norm{\pmb{\Phi} \pmb{q}^a_\phi - \pmb{q}^a}_\infty \le \frac{\varepsilon_r}{1 - \gamma} + \frac{\varepsilon_\psi \left( 1 + \gamma \right) \norm{\pmb{r}_\phi}_\infty }{(1 - \gamma)^2}, \label{eq:err-bnd}
\end{equation}
where $|| \pmb{r}_\phi ||_\infty = \max_a || \pmb{r}_\phi^a ||_\infty$.
\end{theorem}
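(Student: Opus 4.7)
The plan is to apply a standard Bellman-residual argument to the lifted abstract Q-function $\pmb{\Phi}\pmb{q}_\phi$. Let $\mathcal{T}^\pi$ denote the policy-evaluation Bellman operator for $\pi$ in the original MDP; since it is a $\gamma$-contraction in $\norm{\cdot}_\infty$ with fixed point $\{\pmb{q}^a\}_{a\in\mathcal{A}}$, the usual one-line telescoping gives
\[
\norm{\pmb{\Phi}\pmb{q}_\phi^a - \pmb{q}^a}_\infty \;\le\; \frac{1}{1-\gamma}\,\max_{a'}\norm{\mathcal{T}^\pi(\pmb{\Phi}\pmb{q}_\phi)^{a'} - \pmb{\Phi}\pmb{q}_\phi^{a'}}_\infty.
\]
The whole theorem thus reduces to bounding this Bellman residual of $\pmb{\Phi}\pmb{q}_\phi$ in the original MDP in terms of $\varepsilon_r$ and $\varepsilon_\psi$.

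Expanding the residual via the abstract-MDP Bellman equation $\pmb{q}_\phi^a = \pmb{r}_\phi^a + \gamma\pmb{P}_\phi^a \pmb{v}_\phi^\pi$ (left-multiplied by $\pmb{\Phi}$) and subtracting from $\mathcal{T}^\pi(\pmb{\Phi}\pmb{q}_\phi)^a = \pmb{r}^a + \gamma\pmb{P}^a\pmb{\Phi}\pmb{v}_\phi^\pi$ yields the clean decomposition
\[
\mathcal{T}^\pi(\pmb{\Phi}\pmb{q}_\phi)^a - \pmb{\Phi}\pmb{q}_\phi^a \;=\; (\pmb{r}^a - \pmb{\Phi}\pmb{r}_\phi^a) + \gamma(\pmb{P}^a\pmb{\Phi} - \pmb{\Phi}\pmb{P}_\phi^a)\pmb{v}_\phi^\pi.
\]
The reward term is at most $\varepsilon_r$ by \eqref{eq:r-err}, and the assumption $\norm{\pmb{P}_\phi^a}_\infty \le 1$ gives the standard geometric-series bound $\norm{\pmb{v}_\phi^\pi}_\infty \le \norm{\pmb{r}_\phi}_\infty/(1-\gamma)$. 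So what is missing is a \emph{one-step} bound on $\norm{\pmb{P}^a\pmb{\Phi} - \pmb{\Phi}\pmb{P}_\phi^a}_\infty$.

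This last step is the main obstacle, since \eqref{eq:sf-err} is an \emph{infinite-horizon} statement involving the discounted-sum matrix $\pmb{F}^{\overline{\pi}}$ rather than a one-step transition. My approach is to substitute $\pmb{F}^a = \pmb{I} + \gamma\pmb{P}_\phi^a\pmb{F}^{\overline{\pi}}$ into \eqref{eq:sf-err}, whereupon the $\pmb{\Phi}$ and $\pmb{I}$ terms cancel algebraically and leave $\norm{\gamma(\pmb{P}^a\pmb{\Phi} - \pmb{\Phi}\pmb{P}_\phi^a)\pmb{F}^{\overline{\pi}}}_\infty \le \varepsilon_\psi$. Averaging the SF Bellman equation over $\overline{\pi}$ implies $\pmb{F}^{\overline{\pi}} = (\pmb{I} - \gamma\pmb{P}_\phi^{\overline{\pi}})^{-1}$, so right-multiplying the previous inequality by $\pmb{I} - \gamma\pmb{P}_\phi^{\overline{\pi}}$ (which, by $\norm{\pmb{P}_\phi^{\overline{\pi}}}_\infty \le 1$, has induced $\infty$-norm at most $1+\gamma$) strips off $\pmb{F}^{\overline{\pi}}$ and delivers $\norm{\pmb{P}^a\pmb{\Phi} - \pmb{\Phi}\pmb{P}_\phi^a}_\infty \le \varepsilon_\psi(1+\gamma)/\gamma$.

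Combining the three ingredients, the Bellman residual is bounded by $\varepsilon_r + \varepsilon_\psi(1+\gamma)\norm{\pmb{r}_\phi}_\infty/(1-\gamma)$, and multiplying by $1/(1-\gamma)$ recovers \eqref{eq:err-bnd} exactly. The only nontrivial invertibility step (and the place where the exploratory-policy assumption implicitly matters) is the right-multiplication by $\pmb{I} - \gamma\pmb{P}_\phi^{\overline{\pi}}$; everything else is triangle inequality and geometric series.
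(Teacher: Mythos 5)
Your proof is correct and follows essentially the same route as the paper's: the decisive step in both is converting the infinite-horizon SF error \eqref{eq:sf-err} into the one-step bound $\gamma\norm{\pmb{P}^a\pmb{\Phi}-\pmb{\Phi}\pmb{P}_\phi^a}_\infty\le\varepsilon_\psi(1+\gamma)$ by right-multiplying with $[\pmb{F}^{\overline{\pi}}]^{-1}=\pmb{I}-\gamma\pmb{P}_\phi^{\overline{\pi}}$, combined with $\norm{\pmb{v}_\phi^\pi}_\infty\le\norm{\pmb{r}_\phi}_\infty/(1-\gamma)$. The only cosmetic difference is that you invoke the standard Bellman-residual contraction lemma up front, whereas the paper derives the self-referential inequality $B\le\varepsilon_r+\varepsilon_\psi(1+\gamma)\norm{\pmb{r}_\phi}_\infty/(1-\gamma)+\gamma B$ and solves it, which is the same contraction argument unrolled by hand.
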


In comparison to Proposition~\ref{prop:sf-bisim-equiv}, Theorem~\ref{thm:approx-model} requires the matrix $\pmb{F}^a$ to be a SF with respect to some feature-to-feature transition matrix $\pmb{P}_\phi^a$.
This feature-to-feature transition matrix needs to satisfy $|| \pmb{P}_\phi^a ||_\infty \le 1$ in order to guarantee convergence of the discounted value function vectors $\pmb{q}_\phi^a$.
The proof of Theorem~\ref{thm:approx-model} is listed in Appendix~\ref{app:proofs}.

\section{Learning Model Features}\label{sec:learning}

Model Features can be approximated by minimizing the loss
\begin{align}
\mathcal{L}(\pmb{\Phi},\pmb{r}_\phi, \pmb{F}) &= \frac{1}{| \mathcal{A} |} \sum_{a} \Big( \norm{ \pmb{\Phi} \pmb{r}^a_\phi - \pmb{r}^a }_2^2  \nonumber \\
&\hspace{.2in} + \alpha \norm{ \pmb{\Phi} + \gamma \pmb{P}^a \pmb{\Phi} \pmb{F}^{\overline{\pi}} - \pmb{\Phi} \pmb{F}^a }_2^2 \Big) . \label{eq:loss-fn}
\end{align}
Using the L2 norm for training makes minimizing~\eqref{eq:loss-fn} easier with gradient optimization algorithms such as Adam~\cite{kingma2014adam}.
Given any particular solution to $\mathcal{L}$, the feature-to-feature transition matrix can be extracted with\footnote{using $\pmb{F}^a = \pmb{I} + \gamma \pmb{P}_\phi^a \pmb{F}^{\overline{\pi}}$ and $\pmb{F}^{\overline{\pi}} = \left( \pmb{I} - \gamma \pmb{F}^a \right)^{-1}$ } $\pmb{P}_\phi^a = \left( \pmb{F}^a - \pmb{I} \right) \left[ \pmb{F}^{\overline{\pi}} \right]^{-1} / \gamma$.
While performing gradient descent directly on $\mathcal{L}$ will cluster the feature space (see Figure~\ref{fig:grid-world-example} for example), the resulting feature representation will not necessarily produce a feature-to-feature transition matrix with $||  \pmb{P}_\phi^a ||_\infty \le 1$.
To obtain such a feature representation, the feature space is clustered using k-Means~\cite{bishop2016pattern} and the found centroids the feature space is projected into an approximately orthogonal set.
The projection step is performed by assembling a square projection matrix $\pmb{M}$ with each column being set to one of the centroid vectors. 
This matrix is then used to project the feature representation, the feature-to-feature transition matrix $\pmb{P}_\phi^a$, and feature reward model $\pmb{r}_\phi^a$.
Algorithm~\ref{alg:opt} outlines this procedure.
We found that our objective function produces feature clusters that form an linearly independent set.
Hence the matrix $\pmb{M}$ is invertible at every projection step.

\begin{algorithm}     
\caption{Learning Model Features}
\label{alg:opt}     
\begin{algorithmic}
\STATE Initialize $\pmb{\Phi}$, $\pmb{r}_\phi$, and $\pmb{F}$. Let $n$ be the number of features.
\LOOP 
\STATE Perform $k$ gradient updates on $\mathcal{L}$ w.r.t. $\pmb{\Phi}$, $\pmb{r}_\phi$, and $\pmb{F}$
\STATE k-Means clustering on row-space of $\pmb{\Phi}$ with $k=n$.
\STATE Construct $\pmb{M}$ with columns equal to cluster centroids.
\STATE $\pmb{\Phi} \leftarrow \pmb{\Phi} \pmb{M}^{-1}$
\STATE $\pmb{r}_\phi^a \leftarrow \pmb{M} \pmb{r}_\phi^a$ $\forall a \in \mathcal{A}$
\STATE $\pmb{F}^a \leftarrow \pmb{M} \pmb{F}^a \pmb{M}^{-1}$ $\forall a \in \mathcal{A}$
\ENDLOOP
\end{algorithmic}
\end{algorithm}

To compute the value error $|| \pmb{\Phi} \pmb{v}_\phi^\pi - \pmb{v}^\pi ||_\infty$ a policy $\pi$ is evaluated using only the feature transition and reward models $\pmb{P}_\phi^a$ and $\pmb{r}_\phi^a$ (Algorithm~\ref{alg:pol-eval}).
Only the learned model is used to make predictions about future reward outcomes.
The value function $\pmb{v}^\pi = \sum_a \pmb{\Pi}^a \pmb{q}^a$ where $\pmb{\Pi^a} = \text{diag} \{ \pi(s,a) \}_s$.

\begin{algorithm}     
\caption{Feature Policy Evaluation}
\label{alg:pol-eval}     
\begin{algorithmic}
\STATE Given $\pmb{P}_\phi^a$, $\pmb{r}_\phi^a$, and policy matrices $\pmb{\Pi}^a$ $\forall a$.
\REPEAT
\STATE $\pmb{q}_\phi^a \leftarrow \pmb{r}_\phi^a + \gamma \pmb{P}_\phi^a \pmb{v}^\pi$ $\forall a \in \mathcal{A}$
\STATE $\pmb{v}_\phi^\pi \leftarrow \pmb{\Phi}^+ \sum_{a} \pmb{\Pi}^a \pmb{\Phi} \pmb{q}_\phi^a$ \COMMENT{$\pmb{\Phi}^+$ is the pseudo-inverse}
\UNTIL{$\pmb{v}_\phi^\pi$ converges}
\end{algorithmic}
\end{algorithm}

We tested our implementation on the grid world shown in Figure~\ref{fig:grid-world-example}.
Figure~\ref{fig:feat-init} shows the initial feature representation and Figure~\ref{fig:feat-learned} shows the learned representation.
Figure~\ref{fig:train-bnds} plots the error bound~\eqref{eq:err-bnd} together with the computed prediction error $|| \pmb{\Phi} \pmb{v}_\phi^\pi - \pmb{v}^\pi ||_\infty$ for three policies: the optimal policy, uniform random action selection, and an $\varepsilon$-greedy policy which selects the optimal action with $\varepsilon=0.5$ probability.
The error bound can be computed only after 40000 iterations, because to evaluate a policy with Algorithm~\ref{alg:pol-eval} the feature transition matrix needs to satisfy $|| \pmb{P}_\phi^a ||_\infty \le 1$.
Further, the y-axis of Figure~\ref{fig:train-bnds} is scaled to a range between zero and ten, which is the range of possible values for the tested grid world when a discount factor $\gamma = 0.9$ is used. 
In Figure~\ref{fig:train-bnds} the value error of uniform random action selection is lowest and the optimal policy has often the highest error.
This is not surprising, because the model is only approximate and the SFs are trained to predict the future state visitation under uniform random action selection, which seems to incorporate approximation errors tied to this policy.
However, by Theorem~\ref{thm:approx-model} this dependency vanishes as the approximation errors ($\varepsilon_\psi$ and $\varepsilon_r$) tend to zero.

\begin{figure}
\centering
\includegraphics[width=.95\linewidth]{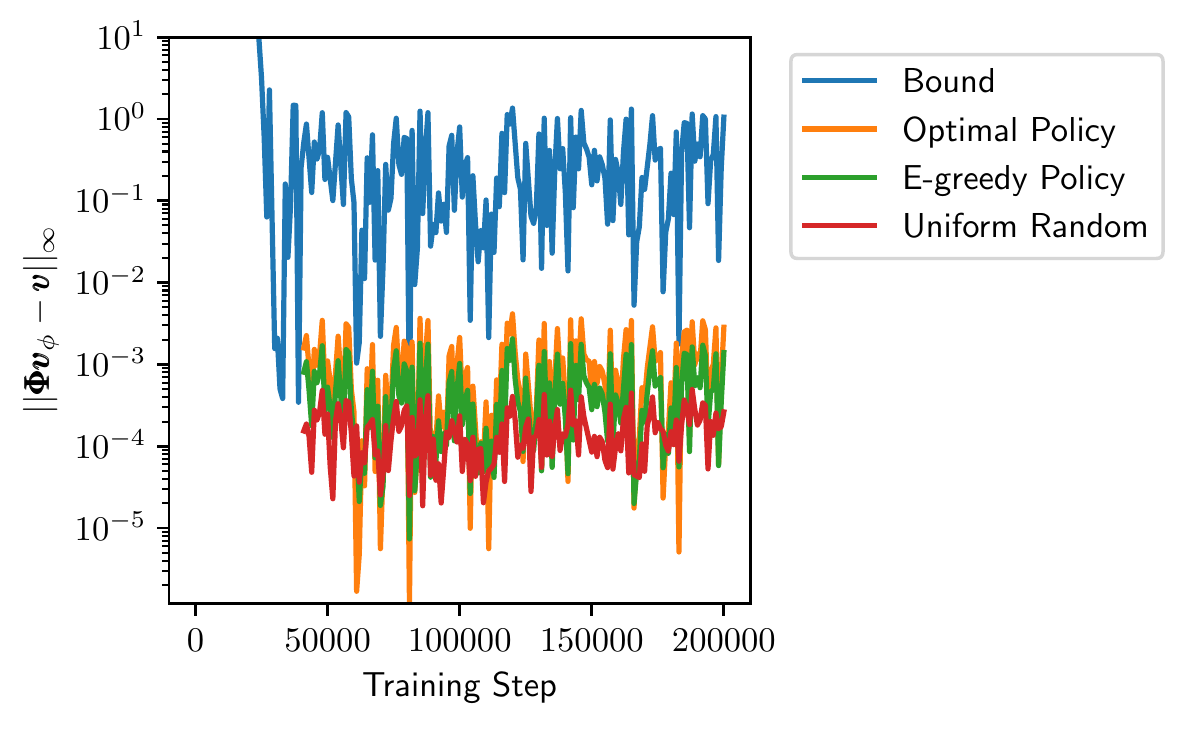}
\caption{Evolution of the value error during training on the grid world shown in Figure~\ref{fig:grid-world-example}.
For all experiments, the constant $\alpha = 0.001$ in~\eqref{eq:loss-fn} and the Adam optimizer with a learning rate of $10^{-3}$ was used. Every 40000 steps a k-means projections step was performed for the first 100000 gradient updates. All parameters were initialzed by sampling the interval $[0,1]$ uniformly.}
\label{fig:train-bnds}
\end{figure}

\section{Transfer Experiments}\label{sec:transfer}

\begin{figure}
\centering
\subfigure[Common $\pmb{\Phi}_\text{gt}$]{\label{fig:transfer-phi-same}\includegraphics[width=.49\linewidth]{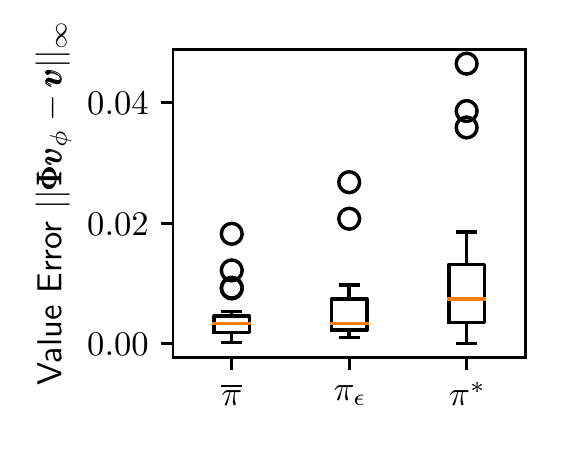}}~~~
\subfigure[Randomly modified $\pmb{\Phi}_\text{gt}$]{\label{fig:transfer-phi-same}\includegraphics[width=.49\linewidth]{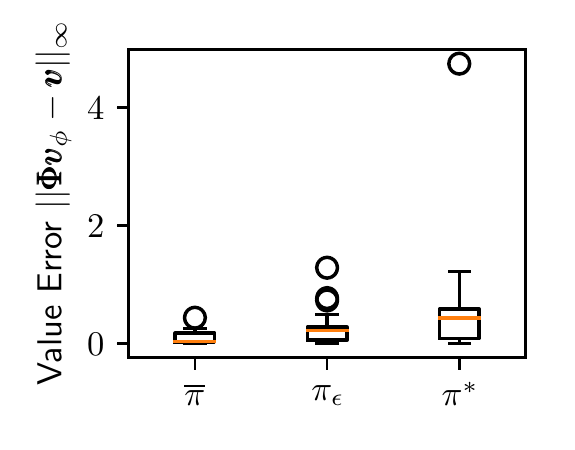}}
\caption{Transfer Value Prediction Errors for uniform action selection ($\overline{\pi}$), $\varepsilon$-Greedy ($\pi_\varepsilon$), and the optimal policy ($\pi^*$). 
Each model was trained for 30000 gradient updates with a 0.1 learning rate.}
\end{figure}

Model-Reductions and Model Features can be thought of as encoding information about which states a behaviourally equivalent in an MDP.
In this section we conduct two experiments on random MDPs to illustrate to what extend this information can be transferred between MDPs.
Specifically, our experiments demonstrate that Model Features are suitable for transfer between MDPs with different reward and transition functions assuming that behavioural equivalence between states is at least approximately preserved.

The first experiment randomly generates a ground truth partition matrix $\pmb{\Phi}_\text{gt}$ of size $50 \times 5$ (i.e. 50 ground level states and 5 state clusters) and constructs an MDP by randomly sampling feature transition and reward models $\pmb{P}_\phi^a$ and $\pmb{r}_\phi^a$\footnote{Rewards were generated by setting entries in $\pmb{r}_\phi^a$ to one with probability 0.1 and otherwise to zero. The matrices $\pmb{P}_\phi^a$ were sampled uniformly from $[0,1]$ and normalized to a stochastic matrix.}.
The resulting MDP $M = \langle \mathcal{S}, \mathcal{A}, p, r, \gamma \rangle$ with 50 states and four actions is then fed into Algorithm~\ref{alg:opt} to estimate a Model Feature matrix $\widehat{\pmb{\Phi}}$.
Then, using the same ground truth partition matrix $\pmb{\Phi}_\text{gt}$, 20 randomly sampled MDPs are constructed.
For each MDP, the same loss function $\mathcal{L}( \widehat{\pmb{\Phi}},\pmb{r}_\phi, \pmb{F})$ from~\eqref{eq:loss-fn} was minimized with respect to only the feature model $\pmb{r}_\phi^a$ and $\pmb{F}^a$.
The feature representation $\widehat{\pmb{\Phi}}$ was preserved for each of the 20 MDPs.
Figure~\ref{fig:transfer-phi-same} plots the value errors for the same three test policies used in the previous section.
Note that rewards were either zero or one, hence state values lie within the interval $[0, 10]$ (for a discount factor $\gamma = 0.9$).
Hence the prediction errors shown in Figure~\ref{fig:transfer-phi-same} are comparably low.
On the training MDP the feature representation $\widehat{\pmb{\Phi}}$ had an error bound value of $2.02 \cdot 10^{-4}$.
The fact that this bound is lower than the errors in Figure~\ref{fig:transfer-phi-same} indicate that approximation errors incorporate some information specific to the MDP on which the feature representation $\widehat{\pmb{\Phi}}$ was initially trained on.

The second experiment used the same protocol but the sampling process of the 20 random transfer MDPs was modified.
For each of the 20 transfer MDPs the ground truth partition matrix $\pmb{\Phi}_\text{gt}$ was first modified by randomly moving one ground state to a different cluster, otherwise the experiment is identical to the previous transfer experiment.
Figure~\ref{fig:transfer-phi-same} shows significantly higher value errors for the three different test policies.
This result is expected, because the model-feature representation $\widehat{\pmb{\Phi}}$ was trained for a different state partitioning.
However, despite the change in $\pmb{\Phi}_\text{gt}$, value errors are still comparably low.
In comparison, randomly predicting values could result in errors as high as 10, but most error values in Figure~\ref{fig:transfer-phi-same} fall below 2.0 for all 20 sampled MDPs with significantly lower averages for all tested policies.

\section{Discussion}

In RL, one question central to transfer is which information can be reused between different MDPs.
Recent work presented SFs as a useful representation for transfer between tasks with a shared transition function but varying reward function~\cite{baretto2017sf}.
Our results show that a modification of the SF model presented by~\citeauthor{baretto2017sf} can be used to learn Model Features.
This result implies that SFs not only encode information about visitation frequencies but also encode information about which states are behaviourally equivalent.
By isolating Model Features as a representation conditions only on the state space, our SF model is suitable for transfer between tasks with different reward and transition functions.
The only assumption we make is that states that are behaviourally equivalent in one task are also behaviourally equivalent in another task.
Further, by Theorem~\ref{thm:approx-model} we show how this assumption can be relaxed by considering approximations of Model Features.
Our model may give an explanation as to why SFs are beneficial when transferring information between MDPs when an underlying feature representation, similar to the $\pmb{\Phi}$ matrix in our model, is represented with a deep neural network~\cite{zhang2016deepsucc}.

Recently~\citet{ruan2015representation} presented algorithms to construct an approximate clustering of bisimilar states.
Their method relies on bisimulation metrics~\cite{ferns2004bisimmetrics} which use the Wasserstein metric to compress transition models.
We phrase learning a Model-Reduction as learning a feature representation.
This approach allows us to tie Model-Reduction to SFs and define a loss objective that can be optimized with a form or projected gradient descent.

\section{Conclusion}

This paper ties learning SFs to learning Model-Reductions.
By expressing Model-Reductions as Model Features, we derive a new SF model and loss objective to inform the design of an optimization algorithm to learn approximate Model-Reductions.
Further, we present a value error bound that can also be used to score how well the Model Features of one task can be transferred to another task.

Because Model Features only encode information about which states are equivalent for predicting rewards, our model is suitable for transfer between tasks with different transition and reward functions.
How well a particular feature representation can be reused depends on the equivalence between states being approximately preserved.

\appendix

\section{Proofs of Theoretical Results}\label{app:proofs}

\begin{definition}[Abstract MDP~\cite{li2006abstraction}]~\label{def:abstraction}
For a finite state-action MDP $M = \langle \mathcal{S}, \mathcal{A}, p, r, \gamma \rangle$, a fixed state abstraction function $\phi: \mathcal{S} \to \mathcal{S}_\phi$, and an arbitrary partition weighting function $\omega : \mathcal{S} \to [0,1]$, such that for every $s_\phi \in \mathcal{S}_\phi$, $\sum_{s \in \phi^{-1}(s_\phi)} \omega(s) = 1$, the abstract MDP $M_\phi = \langle \mathcal{S}_\phi, \mathcal{A}, p_\phi, r_\phi, \gamma \rangle$ is defined as
\begin{align*}
& p_\phi(s_\phi,a,s'_\phi) = \sum_{s \in \phi^{-1}(s_\phi)} \sum_{s' \in \phi^{-1}(s'_\phi)} \omega(s) p(s,a,s') , \\
& r_\phi(s_\phi, a) = \sum_{s \in \phi^{-1}(s_\phi)} \omega(s) r(s,a).
\end{align*} 
\end{definition}

Because this framework also uses a weighing function $\omega$, we define a \emph{weight matrix} in the following way. 

\begin{definition}[Weight Matrix]\label{def:weight-mat}
For a finite state-action MDP $M = \langle \mathcal{S}, \mathcal{A}, p, r, \gamma \rangle$, and consider a state abstraction function $\phi$ with weighting function $\omega$.
Further, assume $| \mathcal{S} | = n$ and $| \mathcal{S}_\phi | = m \le n$. 
We define the weight matrix $\pmb{\Omega}$ as $\pmb{\Omega}(s_\phi,s) = \mathbf{1}_{[ s_\phi =\phi(s) ]} \omega(s)$.
\end{definition}

This weight matrix $\pmb{\Omega}$ can be thought of as a left-inverse of the partition matrix because it projects the original state space $\mathcal{S}$ into the aggregated state space $\mathcal{S}_\phi$.

\begin{lemma} \label{lem:abs-mat}
Let $\phi$ be an abstraction with weighting function $\omega$ for a finite state-action MDP $M = \langle \mathcal{S}, \mathcal{A}, T, R, \gamma \rangle$.
Then the reward vector and transition matrix of the abstract MDP $M_\phi$ can be written as $\pmb{r}_{M_\phi}^a = \pmb{\Omega} \pmb{r}^a$ and $\pmb{P}_{M_\phi}^a = \pmb{\Omega} \pmb{P}^a \pmb{\Phi}$.
Further we have that $\pmb{\Omega} \pmb{\Phi} = \pmb{I}$.
\end{lemma}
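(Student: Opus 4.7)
The plan is to verify each of the three identities by a direct entry-wise computation, using the index-level definitions of $\pmb{\Phi}$ and $\pmb{\Omega}$ and then matching the resulting sums against the definitions of $r_\phi$, $p_\phi$, and the normalization condition $\sum_{s \in \phi^{-1}(s_\phi)} \omega(s) = 1$ from Definition~\ref{def:abstraction}. Since nothing deeper than bookkeeping is involved, I would present all three identities together in a single lemma-scale calculation.

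First, for the reward identity, I would compute the $s_\phi$-th entry of $\pmb{\Omega} \pmb{r}^a$ as $\sum_{s} \pmb{\Omega}(s_\phi, s)\, r(s, a)$ and observe that the indicator $\mathbf{1}[s_\phi = \phi(s)]$ in $\pmb{\Omega}$ restricts the sum to $s \in \phi^{-1}(s_\phi)$, leaving exactly the defining expression $\sum_{s \in \phi^{-1}(s_\phi)} \omega(s)\, r(s,a) = r_\phi(s_\phi, a)$. Second, for the transition identity, I would expand the $(s_\phi, s_\phi')$ entry of $\pmb{\Omega} \pmb{P}^a \pmb{\Phi}$ as a double sum $\sum_{s,s'} \pmb{\Omega}(s_\phi, s)\, p(s, a, s')\, \pmb{\Phi}(s', s_\phi')$, and again use the two indicator functions to cut the sums down to $s \in \phi^{-1}(s_\phi)$ and $s' \in \phi^{-1}(s_\phi')$, recovering $p_\phi(s_\phi, a, s_\phi')$ on the nose.

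Third, for $\pmb{\Omega} \pmb{\Phi} = \pmb{I}$, I would compute $(\pmb{\Omega} \pmb{\Phi})(s_\phi, s_\phi') = \sum_{s} \mathbf{1}[s_\phi = \phi(s)]\, \omega(s)\, \mathbf{1}[s_\phi' = \phi(s)]$. When $s_\phi \neq s_\phi'$ the two indicators are never simultaneously nonzero, so the entry vanishes; when $s_\phi = s_\phi'$ the sum collapses to $\sum_{s \in \phi^{-1}(s_\phi)} \omega(s)$, which equals $1$ by the defining property of $\omega$. This gives the identity matrix.

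I do not expect a main obstacle: the lemma is essentially a translation between the summation-style definition of an abstract MDP and the matrix form used throughout the paper. The only mild subtlety is keeping the row/column conventions of $\pmb{\Phi}$ (of shape $|\mathcal{S}| \times |\mathcal{S}_\phi|$) and $\pmb{\Omega}$ (of shape $|\mathcal{S}_\phi| \times |\mathcal{S}|$) straight so that the products $\pmb{\Omega} \pmb{r}^a$, $\pmb{\Omega} \pmb{P}^a \pmb{\Phi}$, and $\pmb{\Omega} \pmb{\Phi}$ are dimensionally well-formed. Once the conventions are fixed, the three computations are parallel and each reduces to recognizing a defining sum.
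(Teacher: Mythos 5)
Your proposal matches the paper's proof essentially step for step: both verify each identity by an entry-wise expansion, use the indicator in $\pmb{\Omega}$ (and $\pmb{\Phi}$) to restrict sums to the preimage sets $\phi^{-1}(s_\phi)$, and invoke the normalization $\sum_{s \in \phi^{-1}(s_\phi)} \omega(s) = 1$ for the diagonal of $\pmb{\Omega}\pmb{\Phi}$ while the off-diagonal entries vanish because the two indicators cannot both hold. The argument is correct and complete.
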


\begin{proof}
For the reward vector identity we have 
\begin{align*}
\pmb{\Omega} \pmb{r}^a (s_\phi) &= \sum_{s \in \mathcal{S}} \pmb{\Omega}(s_\phi, s) \pmb{r}^a(s) = \sum_{s \in \mathcal{S}} \omega(s) \mathbf{1}_{[ s_\phi =\phi(s) ]} r(s,a) \\
&= \sum_{s \in \phi^{-1}(s_\phi)} \omega(s) r(s,a) = r_\phi(s_\phi,a),
\end{align*}
hence $\pmb{r}_\phi^a = \pmb{\Omega} \pmb{r}^a$.
Similarly, we have that $\pmb{\Omega} \pmb{r}^a (s_\phi) = r_{M_\phi}(s_\phi,\pi(s_\phi))$ and thus $\pmb{r}_\phi^\pi = \pmb{\Omega} \pmb{r}^\pi_M$.
For the transition matrix identity we first look at the first matrix product:
\begin{align*}
\pmb{\Omega} \pmb{P}^a (s_\phi, s') &= \sum_{s \in \mathcal{S}} \pmb{\Omega}(s_\phi, s) \pmb{P}^a (s, s') \\
&= \sum_{s \in \mathcal{S}} \mathbf{1}_{[ s_\phi =\phi(s) ]} \omega(s) \pmb{P}^a (s, s') \\
&= \sum_{s \in \phi^{-1}(s_\phi)} \omega(s) p(s, a, s') .
\end{align*}
For the whole product we have:
\begin{align*}
& [ \pmb{\Omega} \pmb{P}^a \pmb{\Phi} ](s_\phi,s_\phi') = \sum_{s' \in \mathcal{S}} [\pmb{\Omega} \pmb{P}^a] (s_\phi, s') \pmb{\Phi}(s',s_\phi') \\
&= \sum_{s' \in \mathcal{S}} \sum_{s \in \phi^{-1}(s_\phi)} \omega(s) p(s, a, s') \mathbf{1}_{[ s_\phi' =\phi(s') ]} \\
&= \sum_{s \in \phi^{-1}(s_\phi)} \sum_{s' \in \phi^{-1}(s_\phi')} \omega(s) p(s, a, s')  = p_\phi(s_\phi,a,s_\phi').
\end{align*}
Hence $\pmb{P}_\phi^a = \pmb{\Omega} \pmb{P}^a \pmb{\Phi}$.
For the pseudo-inverse we have 
\begin{align*}
[\pmb{\Omega} \pmb{\Phi}](s_\phi,s_\phi) &= \sum_{s \in \mathcal{S}} \pmb{\Omega}(s_\phi,s) \pmb{\Phi}(s,s_\phi) \nonumber \\
&= \sum_{s \in \mathcal{S}} \omega(s) \mathbf{1}_{[ s_\phi =\phi(s) ]} = 1, \\
[\pmb{\Omega} \pmb{\Phi}](s_\phi,s_\phi') &= \sum_{s \in \mathcal{S}} \pmb{\Omega}(s_\phi,s) \pmb{\Phi}(s,s_\phi') \nonumber \\
&= \sum_{s \in \mathcal{S}} \omega(s) \mathbf{1}_{[ s_\phi =\phi(s) ]} \mathbf{1}_{[ s_\phi' =\phi(s) ]} = 0,
\end{align*}
hence $\pmb{\Omega} \pmb{\Phi} = \pmb{I}$.
\end{proof}

Definition~\ref{def:bisimulation} can be rewritten in the framework of~\citeauthor{li2006abstraction}:

\begin{definition}[Model Reduction]~\label{def:bisim}
For a finite state-action MDP $M = \langle \mathcal{S}, \mathcal{A}, p, r, \gamma \rangle$, the state abstraction function $\phi: \mathcal{S} \to \mathcal{S}_\phi$ is a model reduction if for two arbitrary states $s_1$ and $s_2$, $\phi(s_1) = \phi(s_2)$ if and only if
\begin{align*}
& \forall a,~ r(s_1,a) = r(s_2,a) ~~\text{and} \\
& \forall a, \forall s_\phi',~ \sum_{s' \in \phi^{-1}(s_\phi')} p(s_1,a,s') = \sum_{s' \in \phi^{-1}(s_\phi')} p(s_2,a,s'),
\end{align*}
where $\phi^{-1}(s_\phi)$ is the set of states mapped to $s_\phi$.
\end{definition}

\begin{proof}[Proof of Theorem~\ref{thm:bisim}]
Theorem~\ref{thm:bisim} can be proven by showing that $\pmb{r}^a = \pmb{\Phi} \pmb{r}_\phi^a$ and $\pmb{\Phi} \pmb{F}^a = \pmb{\Phi} + \gamma \pmb{P}^a \pmb{\Phi} \pmb{F}^{\overline{\pi}}$ implies that the partition matrix $\pmb{\Phi}$ clusters states according to Definition~\ref{def:bisim}.
For the reward model condition, we have
\begin{align}
\pmb{r}^a_M(s) &= \left( \pmb{\Phi} \pmb{r}_{M_\phi}^a \right)(s) \\
&= \sum_{s_\phi \in \mathcal{S}_\phi} \pmb{\Phi}(s,s_\phi) \pmb{r}_{M_\phi}^a(s_\phi) \\
&= \sum_{s_\phi \in \mathcal{S}_\phi} \mathbf{1}_{[ s_\phi =\phi(s) ]} \sum_{s \in \mathcal{S}} \omega(s) \pmb{r}_{M}^a(s) \\
&= \sum_{s \in \phi^{-1}(\phi(s))} \omega(s) \pmb{r}_{M}^a(s). \label{eq:thm-1-proof-1}
\end{align}
Note that~\eqref{eq:thm-1-proof-1} sums over all entries that lie in the same partition as $s$, but for an arbitrary choice of weighting function $\omega$.
Particularly,~\eqref{eq:thm-1-proof-1} also has to hold for a weighting function that has a weight of one on arbitrary state $s \in \phi^{-1}(\phi(s))$.
Hence $\forall s,s' \in \phi^{-1}(\phi(s)),~ \pmb{r}_{M}^a(s)=\pmb{r}_{M}^a(s')$.
For the transition model we have 
\begin{align*}
\pmb{P}_M^a \pmb{\Phi} = \pmb{\Phi} \pmb{P}_{M_\phi}^a &\iff \pmb{\Omega} \pmb{P}_M^a \pmb{\Phi} = \pmb{\Omega} \pmb{\Phi} \pmb{P}_{M_\phi}^a \\
&\iff \pmb{\Omega} \pmb{P}_M^a \pmb{\Phi} = \pmb{P}_{M_\phi}^a.
\end{align*}
Further, $\pmb{P}_M^a \pmb{\Phi}  =  \pmb{\Phi} \pmb{P}_{M_\phi}^a$ holds for any arbitrary weighting function, so for two distinct weighting functions or matrices $\pmb{\Omega}$ and $\tilde{\pmb{\Omega}}$, we have $\pmb{\Omega} \pmb{P}_M^a \pmb{\Phi} = \tilde{\pmb{\Omega}} \pmb{P}_M^a \pmb{\Phi}$.
By Lemma~\ref{lem:abs-mat}, for any $s_\phi$ and $s_\phi'$ this is equivalent to 
\begin{align}
&\sum_{s \in \phi^{-1}(s_\phi)} \sum_{s' \in \phi^{-1}(s'_\phi)} \omega(s) p(s,a,s') \nonumber \\
&~~= \sum_{s \in \phi^{-1}(s_\phi)} \sum_{s' \in \phi^{-1}(s'_\phi)} \tilde{\omega}(s) p(s,a,s')
\end{align}
Again, we pick two different weighting functions $\omega$ and $\tilde{\omega}$ that assign a weight of one to two different states in $\phi^{-1}(s_\phi)$, hence $\forall s,\tilde{s} \in \phi^{-1}(s_\phi)$, $\sum_{s' \in \phi^{-1}(s'_\phi)} p(s,a,s') = \sum_{s' \in \phi^{-1}(s'_\phi)} p(\tilde{s},a,s')$.
\end{proof}

\begin{proof}[Proof of Proposition~\ref{prop:sf-bisim-equiv}]
Assume that $\pmb{\Phi}$ is a partition matrix satisfying $\pmb{r}^a = \pmb{\Phi} \pmb{r}_\phi^a$ and $\pmb{\Phi} \pmb{F}^a = \pmb{\Phi} + \gamma \pmb{P}^a \pmb{\Phi} \pmb{F}^{\overline{\pi}}$.
To prove the theorem, we have to show that $\pmb{P}^a \pmb{\Phi}  =  \pmb{\Phi} \pmb{P}_\phi^a$ holds.
By applying the identities of Lemma~\ref{lem:abs-mat}, we observe that $\pmb{F}^a$ is a SF with respect to the transition matrix $\pmb{\Omega} \pmb{P}^a \pmb{\Phi}$:
\begin{align}
& \pmb{\Phi} \pmb{F}^a = \pmb{\Phi} + \gamma \pmb{P}^a \pmb{\Phi} \pmb{F}^{\overline{\pi}} \label{eq:sf-proof} \\
\iff & \pmb{\Omega} \pmb{\Phi} \pmb{F}^a = \pmb{\Omega} \pmb{\Phi} + \gamma \pmb{\Omega} \pmb{P}^a \pmb{\Phi} \pmb{F}^{\overline{\pi}} \\
\iff & \pmb{F}^a = \pmb{I} + \gamma \pmb{P}_\phi^a \pmb{F}^{\overline{\pi}} \label{eq:phi-sf}
\end{align}
Hence, $\pmb{F}^{\overline{\pi}}$ is a SF and thus has an inverse $\pmb{F}^{\overline{\pi}} = ( \pmb{I} - \gamma \pmb{P}_\phi^a )^{-1}$, because by Lemma~\ref{lem:abs-mat} $\pmb{P}_\phi^a$ is a stochastic matrix and thus $\pmb{I} - \gamma \pmb{P}_\phi^a$ is invertible.
Substituting~\eqref{eq:phi-sf} into~\eqref{eq:sf-proof}:
\begin{align}
 \pmb{\Phi} \left( \pmb{I} + \gamma \pmb{P}_\phi^a \pmb{F}^{\overline{\pi}} \right) &= \pmb{\Phi} + \gamma \pmb{P}^a \pmb{\Phi} \pmb{F}^{\overline{\pi}} \\
\iff  \pmb{\Phi} \pmb{P}_\phi^a \pmb{F}^{\overline{\pi}} \left[  \pmb{F}^{\overline{\pi}} \right]^{-1} &= \pmb{P}^a \pmb{\Phi} \pmb{F}^{\overline{\pi}} \left[  \pmb{F}^{\overline{\pi}} \right]^{-1} \\
\iff  \pmb{\Phi} \pmb{P}_\phi^a &= \pmb{P}^a \pmb{\Phi} .
\end{align}
\end{proof}

The proof of Theorem~\ref{thm:approx-model} does not depend on the previous results, because the feature matrix $\pmb{\Phi}$ is a real-valued matrix.

\begin{proof}[Proof of Theorem~\ref{thm:approx-model}]
The value error is bounded by 
\begin{align*}
\norm{\pmb{\Phi} \pmb{q}^a_\phi - \pmb{q}^a}_\infty = \norm{ \pmb{\Phi} \pmb{r}_\phi^a + \gamma \pmb{\Phi} \pmb{P}_\phi^a \pmb{v}^\pi_\phi - \pmb{r}^a - \gamma \pmb{P}^a \pmb{v}^\pi }_\infty & \\
\le \norm{ \pmb{\Phi} \pmb{r}_\phi^a - \pmb{r}^a  }_\infty + \gamma \norm{\pmb{\Phi} \pmb{P}_\phi^a \pmb{v}^\pi_\phi - \pmb{P}^a \pmb{v}^\pi}_\infty &
\end{align*}
The second transition error term can be upper bounded with
\begin{align}
& \norm{\pmb{\Phi} \pmb{P}_\phi^a \pmb{v}^\pi_\phi - \pmb{P}^a \pmb{\Phi} \pmb{v}^\pi_\phi + \pmb{P}^a \pmb{\Phi} \pmb{v}^\pi_\phi - \pmb{P}^a \pmb{v}^\pi}_\infty \nonumber \\
&\le \norm{\pmb{\Phi} \pmb{P}_\phi^a \pmb{v}^\pi_\phi - \pmb{P}^a \pmb{\Phi} \pmb{v}^\pi_\phi}_\infty + \norm{ \pmb{P}^a \pmb{\Phi} \pmb{v}^\pi_\phi - \pmb{P}^a \pmb{v}^\pi}_\infty \nonumber  \\
&\le \norm{\pmb{\Phi} \pmb{P}_\phi^a - \pmb{P}^a \pmb{\Phi} }_\infty \norm{\pmb{v}^\pi_\phi}_\infty + \norm{\pmb{P}^a}_\infty \norm{ \pmb{\Phi} \pmb{v}^\pi_\phi - \pmb{v}^\pi}_\infty \nonumber  \\
&\le \norm{\pmb{\Phi} \pmb{P}_\phi^a - \pmb{P}^a \pmb{\Phi} }_\infty \norm{\pmb{v}^\pi_\phi}_\infty + \norm{ \pmb{\Phi} \pmb{v}^\pi_\phi - \pmb{v}^\pi}_\infty. \label{eq:P-err-bnd-1}
\end{align}
The norm of $\pmb{v}_\phi^\pi$ can be upper bounded with
\begin{align*}
 & \norm{\pmb{v}^\pi_\phi}_\infty \le \norm{ \sum_{t=1}^\infty \gamma^{t-1} \left[ \pmb{P}_\phi^\pi \right]^{t-1} \pmb{r}_\phi^\pi }_\infty \\
 &\le \sum_{t=1}^\infty \gamma^{t-1} \norm{ \left[ \pmb{P}_\phi^\pi \right]^{t-1} }_\infty \norm{ \pmb{r}_\phi^\pi }_\infty \le \sum_{t=1}^\infty \gamma^{t-1}\norm{ \pmb{r}_\phi^\pi }_\infty .
\end{align*} 
To bound the term $\gamma \norm{\pmb{\Phi} \pmb{P}_\phi^a - \pmb{P}^a \pmb{\Phi} }_\infty$, we note that $\pmb{F}^{\overline{\pi}}$ has an inverse, because $\pmb{F}^{\overline{\pi}} = \left( \pmb{I} - \gamma \pmb{P}^{\overline{\pi}}_\phi \right)^{-1}$. 
The norm of $\left[ \pmb{F}^{\overline{\pi}} \right]^{-1}$ can be bounded with
\begin{equation*}
\norm{ \left[ \pmb{F}^{\overline{\pi}} \right]^{-1} }_\infty = \norm{ \pmb{I} - \gamma \pmb{P}^{\overline{\pi}}_\phi }_\infty \le 1 +  \gamma \norm{\pmb{P}^{\overline{\pi}}_\phi }_\infty \le 1 + \gamma .
\end{equation*}
Hence we can write
\begin{align}
&\pmb{\Phi} + \gamma \pmb{P}^a \pmb{\Phi} \pmb{F}^{\overline{\pi}} - \pmb{\Phi} \pmb{F}^a \nonumber \\
&= \pmb{\Phi} + \gamma \pmb{P}^a \pmb{\Phi} \pmb{F}^{\overline{\pi}} - \pmb{\Phi} \left( \pmb{I} + \gamma \pmb{P}_\phi^a \pmb{F}^{\overline{\pi}} \right) \nonumber \\
&= \gamma \left( \pmb{P}^a \pmb{\Phi} - \pmb{\Phi} \pmb{P}_\phi^a \right) \pmb{F}^{\overline{\pi}} .
\end{align}
Hence 
\begin{equation*}
\left( \pmb{\Phi} + \gamma \pmb{P}^a \pmb{\Phi} \pmb{F}^{\overline{\pi}} - \pmb{\Phi} \pmb{F}^a \right) \left[ \pmb{F}^{\overline{\pi}} \right]^{-1} = \gamma \left( \pmb{P}^a \pmb{\Phi} - \pmb{\Phi} \pmb{P}_\phi^a \right),
\end{equation*}
and thus the transition norm term can be bounded with
\begin{align}
&\norm{ \gamma \left( \pmb{P}^a \pmb{\Phi} - \pmb{\Phi} \pmb{P}_\phi^a \right) }_\infty \nonumber \\
&= \norm{ \left( \pmb{\Phi} + \gamma \pmb{P}^a \pmb{\Phi} \pmb{F}^{\overline{\pi}} - \pmb{\Phi} \pmb{F}^a \right) \left[ \pmb{F}^{\overline{\pi}} \right]^{-1} }_\infty \nonumber \\
&\le \norm{ \pmb{\Phi} + \gamma \pmb{P}^a \pmb{\Phi} \pmb{F}^{\overline{\pi}} - \pmb{\Phi} \pmb{F}^a }_\infty \norm{ \left[ \pmb{F}^{\overline{\pi}} \right]^{-1} }_\infty \nonumber  \\
&\le \varepsilon_\psi \left( 1 + \gamma \right) . \label{eq:P-err-bnd-2}
\end{align}
Using~\eqref{eq:P-err-bnd-2} and~\eqref{eq:P-err-bnd-1} the value function approximation error can be upper bounded with
\begin{align}
&\norm{\pmb{\Phi} \pmb{q}^a_\phi - \pmb{q}^a}_\infty \nonumber \\
&\le \norm{ \pmb{\Phi} \pmb{r}_\phi^a - \pmb{r}^a  }_\infty + \gamma \norm{\pmb{\Phi} \pmb{P}_\phi^a \pmb{v}^\pi_\phi - \pmb{P}^a \pmb{v}^\pi}_\infty \nonumber \\
&\le \varepsilon_r + \gamma \norm{\pmb{\Phi} \pmb{P}_\phi^a - \pmb{P}^a \pmb{\Phi} }_\infty \norm{\pmb{v}^\pi_\phi}_\infty + \gamma \norm{ \pmb{\Phi} \pmb{v}^\pi_\phi - \pmb{v}^\pi}_\infty \nonumber \\
&\le \varepsilon_r + \frac{\varepsilon_\psi \left( 1 + \gamma \right) \norm{\pmb{r}_\phi}_\infty }{1 - \gamma} + \gamma \norm{ \pmb{\Phi} \pmb{v}^\pi_\phi - \pmb{v}^\pi}_\infty . \label{eq:thm-esp-bnd-1}
\end{align}
The bound~\eqref{eq:thm-esp-bnd-1} is independent of the action selected, hence we have that if $B>0$ is an upper bound for all action-value errors, i.e. $\norm{\pmb{\Phi} \pmb{q}^a_\phi - \pmb{q}^a}_\infty \le B$, then $\norm{ \pmb{\Phi} \pmb{v}^\pi_\phi - \pmb{v}^\pi}_\infty \le B$.
This allows us to write the final bound:
\begin{align*}
\norm{\pmb{\Phi} \pmb{q}^a_\phi - \pmb{q}^a}_\infty \le B &= \varepsilon_r + \frac{\varepsilon_\psi \left( 1 + \gamma \right) \norm{\pmb{r}_\phi}_\infty }{1 - \gamma} + \gamma B \\
\iff ~~~ \norm{\pmb{\Phi} \pmb{q}^a_\phi - \pmb{q}^a}_\infty &\le \frac{\varepsilon_r}{1 - \gamma} + \frac{\varepsilon_\psi \left( 1 + \gamma \right) \norm{\pmb{r}_\phi}_\infty }{(1 - \gamma)^2}.
\end{align*}
\end{proof}

\bibliography{library}

\begin{thebibliography}{15}
\providecommand{\natexlab}[1]{#1}
\providecommand{\url}[1]{\texttt{#1}}
\expandafter\ifx\csname urlstyle\endcsname\relax
  \providecommand{\doi}[1]{doi: #1}\else
  \providecommand{\doi}{doi: \begingroup \urlstyle{rm}\Url}\fi

\bibitem[Barreto et~al.(2017)Barreto, Dabney, Munos, Hunt, Schaul, van Hasselt,
  and Silver]{baretto2017sf}
Barreto, Andr{\'e}, Dabney, Will, Munos, R{\'e}mi, Hunt, Jonathan~J, Schaul,
  Tom, van Hasselt, Hado~P, and Silver, David.
\newblock Successor features for transfer in reinforcement learning.
\newblock In \emph{Advances in neural information processing systems}, pp.\
  4055--4065, 2017.

\bibitem[Christopher(2006)]{bishop2016pattern}
Christopher, M~Bishop.
\newblock \emph{PATTERN RECOGNITION AND MACHINE LEARNING.}
\newblock Springer-Verlag New York, 2006.

\bibitem[Dayan(1993)]{dayan1993successor}
Dayan, Peter.
\newblock Improving generalization for temporal difference learning: The
  successor representation.
\newblock \emph{Neural Computation}, 5\penalty0 (4):\penalty0 613--624, 1993.

\bibitem[Ferns et~al.(2004)Ferns, Panangaden, and
  Precup]{ferns2004bisimmetrics}
Ferns, Norm, Panangaden, Prakash, and Precup, Doina.
\newblock Metrics for finite markov decision processes.
\newblock In \emph{Proceedings of the 20th conference on Uncertainty in
  artificial intelligence}, pp.\  162--169. AUAI Press, 2004.

\bibitem[Givan et~al.(2003)Givan, Dean, and Greig]{givan2003bisimulation}
Givan, Robert, Dean, Thomas, and Greig, Matthew.
\newblock Equivalence notions and model minimization in markov decision
  processes.
\newblock \emph{Artificial Intelligence}, 147\penalty0 (1):\penalty0 163--223,
  2003.

\bibitem[Kaelbling et~al.(1996)Kaelbling, Littman, and
  Moore]{kaelbling1996reinforcement}
Kaelbling, Leslie~Pack, Littman, Michael~L, and Moore, Andrew~W.
\newblock Reinforcement learning: A survey.
\newblock \emph{Journal of artificial intelligence research}, 4:\penalty0
  237--285, 1996.

\bibitem[Kingma \& Ba(2014)Kingma and Ba]{kingma2014adam}
Kingma, Diederik~P. and Ba, Jimmy.
\newblock Adam: {A} method for stochastic optimization.
\newblock \emph{CoRR}, abs/1412.6980, 2014.
\newblock URL \url{http://arxiv.org/abs/1412.6980}.

\bibitem[Lehnert et~al.(2017)Lehnert, Tellex, and Littman]{lehnert2017sf}
Lehnert, Lucas, Tellex, Stefanie, and Littman, Michael~L.
\newblock Advantages and limitations of using successor features for transfer
  in reinforcement learning.
\newblock \emph{arXiv preprint arXiv:1708.00102}, 2017.

\bibitem[Li et~al.(2006)Li, Walsh, and Littman]{li2006abstraction}
Li, Lihong, Walsh, Thomas~J, and Littman, Michael~L.
\newblock Towards a unified theory of state abstraction for mdps.
\newblock In \emph{ISAIM}, 2006.

\bibitem[Momennejad et~al.(2017)Momennejad, Russek, Cheong, Botvinick, Daw, and
  Gershman]{momennejad2017successor}
Momennejad, Ida, Russek, Evan~M, Cheong, Jin~H, Botvinick, Matthew~M, Daw, ND,
  and Gershman, Samuel~J.
\newblock The successor representation in human reinforcement learning.
\newblock \emph{Nature Human Behaviour}, 1\penalty0 (9):\penalty0 680, 2017.

\bibitem[Ruan et~al.(2015)Ruan, Comanici, Panangaden, and
  Precup]{ruan2015representation}
Ruan, Sherry~Shanshan, Comanici, Gheorghe, Panangaden, Prakash, and Precup,
  Doina.
\newblock Representation discovery for mdps using bisimulation metrics.
\newblock In \emph{AAAI}, pp.\  3578--3584, 2015.

\bibitem[Sutton(1990)]{sutton1990dyna}
Sutton, Richard~S.
\newblock Integrated architectures for learning, planning, and reacting based
  on approximating dynamic programming.
\newblock In \emph{Proceedings of the seventh international conference on
  machine learning}, pp.\  216--224, 1990.

\bibitem[Sutton \& Barto(1998)Sutton and Barto]{sutton98}
Sutton, Richard~S. and Barto, Andrew~G.
\newblock \emph{Reinforcement Learning: An Introduction}.
\newblock A Bradford Book. MIT Press, Cambridge, MA, 1 edition, 1998.

\bibitem[Taylor \& Stone(2009)Taylor and Stone]{taylor2009transfer}
Taylor, Matthew~E and Stone, Peter.
\newblock Transfer learning for reinforcement learning domains: A survey.
\newblock \emph{Journal of Machine Learning Research}, 10\penalty0
  (Jul):\penalty0 1633--1685, 2009.

\bibitem[Zhang et~al.(2016)Zhang, Springenberg, Boedecker, and
  Burgard]{zhang2016deepsucc}
Zhang, Jingwei, Springenberg, Jost~Tobias, Boedecker, Joschka, and Burgard,
  Wolfram.
\newblock Deep reinforcement learning with successor features for navigation
  across similar environments.
\newblock \emph{arXiv preprint arXiv:1612.05533}, 2016.

\end{thebibliography}
\bibliographystyle{icml2018}

\end{document}